\newcommand{\yp}[1]{\textcolor{orange}{Yiwen: #1}}
\newcommand{\ignore}[1]{}
\newcommand{\added}[1]{\textcolor{black}{#1}}
\renewcommand{\vec}[1]{\overrightarrow{#1}}
\renewcommand{\emph}[1]{\textit{#1}}
\begin{document}
%
\title{FLORA: Unsupervised Knowledge Graph Alignment by Fuzzy Logic} 
%
%
\author{Yiwen Peng\orcidID{0009-0007-7902-4097} \and Thomas Bonald\orcidID{0000-0003-0468-0384} \and Fabian M. Suchanek\orcidID{0000-0001-7189-2796}}
\authorrunning{Y. Peng et al.}
%

\institute{Télécom Paris, Institut Polytechnique de Paris, Palaiseau, France 
\\
\email{\{yiwen.peng, thomas.bonald, fabian.suchanek\}@telecom-paris.fr}
\\
}
%
\maketitle              
\begin{abstract}
Knowledge graph alignment is the task of matching equivalent entities (that is, instances and classes) and relations across two knowledge graphs. Most existing methods focus on pure entity-level alignment, computing the similarity of entities in some  embedding space. They lack interpretable reasoning and need training data to work.
In this paper, we propose FLORA, a simple yet effective method that (1) is unsupervised, i.e., does not require training data, (2) provides a holistic alignment for entities and relations iteratively, 
(3) is based on fuzzy logic and thus delivers interpretable results, (4) provably converges, (5) allows dangling entities, i.e., entities without a counterpart in the other KG, and (6) achieves state-of-the-art results on major benchmarks.
%

\keywords{Knowledge Graphs \and Entity Alignment \and Holistic Matching \and Symbolic Reasoning \and Fuzzy logic}
\end{abstract}
%
%
%

\section{Introduction}

The task of Knowledge Graph (KG) alignment consists in matching both entities and relations in one KG to their equivalents in another. 
Figure~\ref{fig:example} shows a toy example of  alignment between DBpedia and Wikidata.
KG alignment is useful for knowledge fusion~\cite{xin2022large,wang2022facing} and thus for a wide range of  downstream applications like question answering~\cite{dong2023hierarchy,abi2023psychic}, common-sense reasoning~\cite{liu2021kg}, and recommender systems~\cite{chen2024macro,huang2023disentangled}.
It is a challenging task for several reasons.  First, KGs are generally  heterogeneous and incomplete: One KG may contain information that the other one does not contain. Second, entity names can differ  vastly across KGs (in Figure~\ref{fig:example} for instance, entities in Wikidata are represented by non-readable IDs, unlike in DBpedia). Third, training data are most often  not available in practice, due to the high cost of manual annotation. Finally, the alignment of entities and relations  is interdependent.

\begin{figure}[h]
\centering
\includegraphics[width=\textwidth]{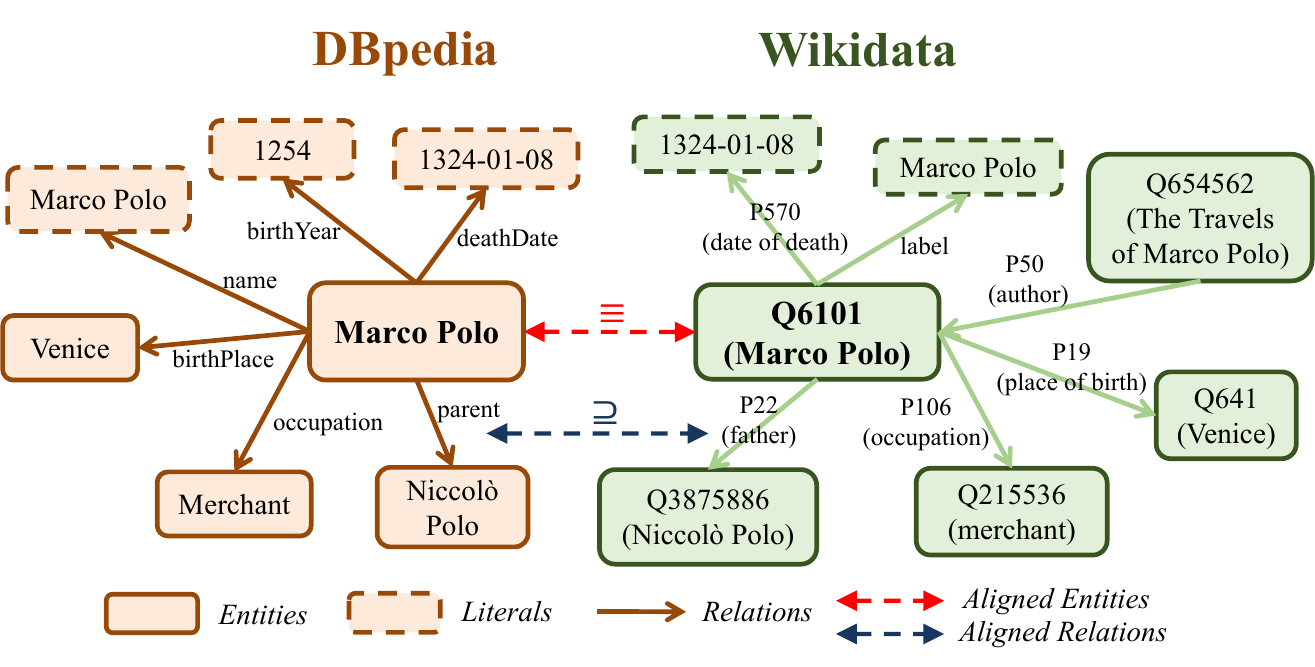}
\caption{An excerpt of DBpedia and Wikidata KGs. Note the asymmetric mapping of the relation \emph{parent} to the sub-relation \emph{father}.} \label{fig:example}
\end{figure}

Many approaches exist for KG alignment, but most  focus on  the alignment of entities only. 
Moreover, they generally   require training data \cite{2020experimental}
or assume a one-to-one entity mapping from one KG to the other (i.e, ignore dangling entities)~\cite{fgwea}.
A notable exception is PARIS~\cite{paris}, an approach based on the iterative computation of probabilistic equations that  has been shown to outperform other methods in recent studies~\cite{paris_plus,openea}. However, this approach (1) lacks convergence guarantees, (2) does not achieve good performance when functional relations are absent, and (3) cannot see similarities between literals beyond a  strict identity.

To address these shortcomings, we present FLORA\footnote{Fuzzy-Logic based Object and Relation Alignment},
a novel unsupervised approach for aligning both entities and relations. FLORA formalizes the alignment in Fuzzy Logic, in a principled framework that integrates various signals, both semantic and structural,  and yields human-interpretable explanations for the results. 
FLORA works even in the presence of dangling entities and  in the absence of functional relations. 
It relies on an iterative algorithm that provably converges. 
Our contributions are the following:
\begin{itemize}
    \item[$\bullet$] FLORA, an  approach based on Fuzzy Logic for aligning  entities and relations that accounts for the asymmetry of relation alignments and the incompleteness of KGs.  Although unsupervised by nature, FLORA can optionally exploit training data if available.
    \item[$\bullet$] An iterative algorithm that provably converges towards the optimal solution of the  problem, as formalized in Fuzzy Logic.
    \item[$\bullet$] Extensive experiments on five entity alignment datasets across four languages, as well as two KG alignment datasets from the OAEI KG Track, showing that FLORA consistently outperforms both unsupervised and supervised baselines.
\end{itemize}
\noindent In what follows, Section~\ref{sec:relwork} reviews related work, Section~\ref{sec:preliminaries} provides preliminaries, Section~\ref{sec:fuzzy-logic} introduces our Fuzzy Logic formalism, Section~\ref{sec:method} presents our method, Section~\ref{sec:experiments} shows experimental results, and Section~\ref{sec:conclusion} concludes and outlines future work. \added{All our code and data are publicly available at \url{https://github.com/dig-team/FLORA}.}

\ignore{
\begin{enumerate}
 \item FLORA is unsupervised, i.e., does not require training data. At the same time, FLORA \emph{can} make use of training data if it is available.
 \item FLORA provides a holistic matching for instances and relations at the same time. 
 \item FLORA can deal with dangling entities, by-passing the common but unrealistic assumption that each entity has exactly one counterpart in the other KG. \yp{Readd.}
 \item FLORA is based on fuzzy logic, i.e., delivers interpretable results that can be explained.
 \item FLORA is based on a solid theoretical foundation that allows us to prove that our algorithm  converges
 \item FLORA achieves state-of-the art results across all major benchmarks.
\end{enumerate}
}

 \ignore{
\begin{itemize}
    \item \textbf{Preserve all the strengths of the original PARIS.}
    
    The main strengths from the original PARIS are kept: 1) unsupervised, 2) holistic matching for instances, classes, and relations, 3) dealing with asymmetric KBs (allow unmatched entities).
    
    \item \textbf{Revisit the formalism for guaranteed convergence.}
    
    The first version of PARIS was based on an iterative computation of probabilistic equations that was not guaranteed to converge. In this paper, we base PARIS on Fuzzy logic and prove convergence.
    
    \item \textbf{Integrate semantic and structure patterns for enhanced matching.}
    
    When functional relations are absent for an entity, the original PARIS approach struggles to identify the correct entity matches. In this paper, we integrate both semantic and structural aspects and unify them within a single rule. Additionally, the rule is further refined to ensure stricter applicability in rare exceptions.
    
    \item \textbf{Achieve SOTA performance on recent EA benchmarks.}

    A wide range of EA benchmarks exist, including monolingual and multilingual variants, as well as sparse and dense knowledge graphs (KGs), with or without unmatched entities. In our experiments, we cover most of the datasets to comprehensively evaluate the effectiveness of our framework.
\end{itemize}
}

\section{Related Work}\label{sec:relwork}


\paragraph{\textbf{Entity Alignment.}}
Many existing approaches to entity alignment (EA) work in a supervised setting \cite{2020experimental}.
They use either translation-based embedding~\cite{jape,bootea,attre,fuzzyea,transedge},  GNN encoders~\cite{gcnalign,rdgcn,attrgnn,imea,rhgn} or fine-tuned language models~\cite{bert-int,tea,sdea,chatea}. 
Different from these approaches, FLORA does not require training data.
In the unsupervised setting, non-neural methods such as LogMap~\cite{logmap}, PARIS~\cite{paris}, PRASE~\cite{prase}, and NALA~\cite{nala} exploit logical reasoning and lexical matching. Other methods transform the EA task into an optimal assignment problem~\cite{cpl-ot,seu,fgwea}, or leverage names, attributes and relations to improve the  alignment of entities \cite{multike}.
 Neural methods, such as ICLEA~\cite{iclea}, SelfKG~\cite{selfkg}, LLM4EA~\cite{llm4ea} and HLMEA~\cite{hlmea},  adopt a self-supervised setting or  leverage large language models (LLMs) in a zero-shot manner. Unlike FLORA,  these methods are \emph{black-box} methods, i.e., they cannot explain why a given alignment was found. 
Besides, we show in our experiments that FLORA outperforms all of the above approaches on standard benchmark datasets. Moreover, FLORA aligns not just entities, but also relations at the same time.

\paragraph{\textbf{Knowledge Graph Alignment.}}
The task of KG alignment (also known as KG matching or holistic KG alignment) extends EA by aligning not only instances, but also classes and relations. \added{Unlike Ontology Alignment~\cite{euzenat2007ontology}}, KG Alignment takes a simplistic view on classes, and typically does not consider asymmetric class subsumptions across KGs.  
The Ontology Alignment Evaluation Initiative (OAEI)\footnote{\url{https://oaei.ontologymatching.org/}} organizes a KG track tailored for KG alignment. Methods such as OLaLa~\cite{olala} and SOBERT~\cite{sobert} align only classes. Our work, in contrast, aims at performing a holistic matching of instances, classes, and relations. Some methods like PARIS~\cite{paris}, LogMap~\cite{logmap}, ATMatcher~\cite{atbox}, and AgreementMakerLight~\cite{aml} are based on lexical matching. Others, such as Wiktionary~\cite{wiktionary} or ALOD2Vec~\cite{alod2vec} incorporate external background knowledge. We can show that FLORA achieves better results than these methods without requiring additional external datasets.

\paragraph{\textbf{Fuzzy Logic.}} Very few approaches use Fuzzy Logic for KG alignment. FuzzyEA~\cite{fuzzyea} relies on a standard embedding approach for entity alignment and Fuzzy Logic is used only to filter candidate entity pairs. This is very different from our approach, where Fuzzy Logic is central in the alignment process, for both entities and relations. \added{In other works~\cite{hnatkowska2022fuzzy,kozierkiewicz2023fuzzy,todorov2014fuzzy,fernandez2012fuzzyalign}}, Fuzzy Logic is used for ontology alignment, but is limited to either class matching or instance matching.


 \section{Preliminaries}\label{sec:preliminaries}
A knowledge graph $\mathcal{G}$ is based on a set of entities $\mathcal{E}$ and a set of relations $\mathcal{R}$. In this paper, in line with~\cite{chen2023rethinking,wang2022facing,chen2020learning}, we see a KG 
as a set of triples (or facts) $\mathcal{T}\subset \mathcal{E}\times \mathcal{R} \times \mathcal{E}$. 
Each triple $(h, r, t)\in \mathcal{T}$ represents a semantic relationship between the head entity $h$ and the tail entity $t$ with the relation $r$. We denote by $r(h,t)$ the corresponding boolean proposition, which evaluates to \emph{true} if and only if $(h, r, t)\in \mathcal{T}$. Without loss of generality, we assume that each relation $r\in \mathcal{R}$ has an inverse relation $r^{-1}\in \mathcal{R}$, such that $r(h,t)$ if and only if $r^{-1}(t,h)$. 
In this paper, the entity set $\mathcal{E}$ includes literals, instances, and classes. 

\paragraph{\textbf{Functionality.}}
Like PARIS \cite{paris}, our approach exploits the notion of functionality of a relation, i.e., the degree to which a relation tends to map the head entity to a unique tail entity. 
Formally, the functionality of a relation $r\in \mathcal{R}$ is:
\[
\textit{fun}(r) = \frac{\left| \{ h \mid \exists t: \ r(h,t) \} \right|}
{\left| \{ (h, t) \mid r(h,t) \} \right|}
\]
Note that $\textit{fun}(r)\in [0,1]$ with $\textit{fun}(r)=1$ if and only if the relation $r$ is functional: It maps any head entity to a unique tail entity. 
For example, the relation \textit{hasCapital} generally assigns exactly one capital city to each country, so that  $\textit{fun}(\textit{hasCapital}) \approx 1$. If the entities referring to \textit{France} in each KG are aligned, the corresponding entities referring to \textit{Paris} can then be aligned as well.

Our approach also uses the notion of  \textit{local} functionality, which is specific to the head entity $h$:
\[
\textit{fun}(r,h) = \frac{1}
{\left| \{ t \mid r(h,t) \} \right|}
\]
Again, $\textit{fun}(r,h)\in [0,1]$ with $\textit{fun}(r,h)=1$ if and only if there is a unique entity $t$ such that $r(h,t)$ is true. The local functionality is needed to deal with relations that are globally functional, but have local exceptions.
 For instance, \textit{South Africa} has three capitals, \textit{Pretoria}, \textit{Bloemfontein}, and \textit{Cape Town}, so that the relation \textit{hasCapital} is not locally functional for the head entity \textit{South Africa}. 

\paragraph{\textbf{KG Alignment.}} Given two knowledge graphs $\mathcal{G} =\langle\mathcal{E},\mathcal{R}, \mathcal{T}\rangle$, $\mathcal{G'}  =\langle\mathcal{E'},\mathcal{R'}, \mathcal{T'}\rangle$, entity alignment (EA) is the task of finding the set $$\mathcal{M}_e = \{(e, e') \mid e \equiv e', e \in \mathcal{E}, e' \in \mathcal{E}'\}$$ where $e \equiv e'$ means that entity $e$ is semantically equivalent to $e'$. 
Relation alignment is the task of finding the set  $$\mathcal{M}_{r} = \{(r, O, r') \mid r O  r', r \in \mathcal{R}, r' \in \mathcal{R}', O \in \{\subseteq, \supseteq, \equiv\}\},$$
where $\subseteq, \supseteq, \equiv$ refer to subrelation, superrelation, or equivalence, respectively. 
KG alignment is the task of doing the  alignment of both entities and relations at the same time.
 \section{Fuzzy Logic Framework} \label{sec:fuzzy-logic}



Our method, FLORA, is based on Fuzzy Logic. This allows us to integrate signals of different nature,  semantic and structural, in a single principled framework. Our framework can use arbitrary aggregation functions, as long as they are continuous and non-decreasing. This distinguishes our framework from Probabilistic Soft Logic~\cite{bach2017hinge}, which commonly requires linear aggregation functions (usually Łukasiewicz T-norms, \added{or G\"odel T-norms for negatively weighted rules~\cite{dickens2021negative}), 
and from standard propositional G\"odel logic, which uses \emph{min} aggregation~\cite{baaz2007first}}.



\paragraph{\textbf{Fuzzy Inference System.}} A \emph{fuzzy set} $A$ over a universe of discourse $X$ is given by a membership function $\mu_A$, which maps every element of $X$ to a value in $[0,1]$. In an often-used example, $X$ is the range of temperatures in degrees Celsius and $A$ is the fuzzy set that describes ``hot temperatures''. The membership function of $A$ is then, for example, $\mu_A(x)=0$ for $x\leq 10^{\circ}$, $\mu_A(x)=(x-10^{\circ})/15^{\circ}$ for $x$ between $10^{\circ}$ and $25^{\circ}$, and $\mu_A(x)=1$ for $x\geq 25^{\circ}$. In its simplest form, a Mamdani-style Fuzzy Rule Based Reasoning System~\cite{fis}, or Fuzzy Inference System (FIS), consists of a set of rules of the form
\[p_1~ \textit{is } P_1 \wedge ... \wedge p_n ~\textit{is } P_n  \Rightarrow  c~ \textit{is } C \]
\noindent Here, $p_1,...,p_n$ are input variables (we call them \emph{premises}), $P_1,...,P_n$ are fuzzy sets, $c$ is the output variable (which we call the \emph{conclusion}), and $C$ is a fuzzy set. For example, $p_1$ can be the temperature of the room in degrees Celcius, $P_1$ is the fuzzy set that describes ``hot temperature'',  $p_2$ is the number of people in the room, $P_2$ is the fuzzy set of ``a crowded room'', $c$ is the desired speed of the fan (say, in rotations per minute), and $C$ is the fuzzy set of ``a fast rotation''. The expression ``$p_i~ \textit{is}~ P_i$'' (for $i=1,...,n$) evaluates to $\mu_{P_i}(p_i)$ (in a process called \emph{fuzzification}). 
In our example, a room temperature of $p_1=20^{\circ}$ belongs to the fuzzy set $P_1$ of hot temperatures to a degree of $\frac{2}{3}$, and a number of people $p_2=10$ could belong to ``a crowded room'' $P_2$ to a degree of 0.5. 

The \emph{firing strength} of the rule is then computed by aggregating the individual fuzzified values of each premise using a custom aggregation function $\phi$. Usually, $\phi$ is a T-norm \cite{fodor2004left}, but we will not impose this restriction in this paper. A common choice for $\phi$ is \emph{min}, which yields 0.5 in our example. The output of the rule is a fuzzy set for $c$, which is given by the membership function of $C$, capped at the firing strength of the rule. 
In our example, $\mu_C$ could assign a value of 1 to a speed of more than 60 rotations per minute, and would  be capped to 0.5. If several rules have the same output variable ($c$ in our example), their fuzzy sets ($C$ in our example) are combined, usually using a point-wise max operation on the membership functions. This yields one final fuzzy set for the output variable, which is then \emph{defuzzified} into a single scalar value $c^*$.

\paragraph{\textbf{Simple Positive FIS.}} For our work, we need a simpler form of FIS, where all  membership functions are identity functions, there is no negation, and defuzzification is done by the First of Maxima (FoM) method, i.e., the aggregated fuzzy set $C$ is defuzzified into the scalar value $c^*=\min \{ c | \forall x, \mu_C(c)  \ge \mu_C(x) \}$. Our example rule can thus be written more simply as:
\[ \textit{temperature} \wedge \textit{crowded} \Rightarrow \textit{fanspeed} \]
\noindent Here, \textit{temperature} is a normalized value for the temperature in $[0,1]$, \textit{crowded} is a normalized value for the degree of crowdedness in $[0,1]$, and \textit{fanspeed} is the normalized speed of the fan in $[0,1]$. If we use \emph{min} as aggregation, this rule means that \emph{fanspeed} must be identical to the minimum of \emph{temperature} and \emph{crowdedness}. If several rules have \emph{fanspeed} as conclusion, then it will be the maximum of the firing strengths of the rules. More formally:

\begin{definition}[Simple Positive FIS]
A Simple Positive FIS $F$ is a set of rules of the form $P_1 \wedge ... \wedge P_n \stackrel{\phi}{\Longrightarrow} C$, where $P_1,...,P_n$ are input variables (premises) that have a given value in $[0,1]$, $C$ (the conclusion) is an output variable whose value is to be computed, and $\phi:\mathbb{R}^n \rightarrow [0,1]$ is an aggregation function. A rule of $F$ is \emph{satisfied} if the value of the conclusion is greater than or equal to the rule strength $\phi(P_1,...,P_n)$. A \emph{solution} of $F$ is an assignment of each output variable to a value so that (1) all rules are satisfied, and (2) no output variable can be assigned a smaller value and still satisfy all rules.
\end{definition}

\paragraph{\textbf{Recursive FIS.}} We now extend the Simple Positive FIS by allowing the conclusions to appear as premises of rules:
\begin{definition}[Recursive FIS]\label{def:recfis}
A recursive FIS is a Simple Positive FIS where the output variables can appear as premises.
\end{definition}
In our example, a fast-running fan may attract more people seeking chilling air:
\[ \textit{fanspeed} \Rightarrow \textit{crowded} \]
The notions of rule satisfaction and solution carry over to recursive FISs.


\paragraph{\textbf{Fixed point iteration.}}
Algorithm~\ref{alg:fis} computes the solution of a Recursive FIS. It starts by assigning all output variables the value of zero, computes the firing strength of each rule, and updates the value of the output variable  of this rule. This process is iterated until convergence.
%
\begin{algorithm}
\caption{Solve Recursive FIS}\label{alg:fis}
\begin{algorithmic}
\Require A recursive FIS $F$
\Ensure The solution $v$ of $F$
\State $v(x) \gets 0$ for all output variables $x$ of $F$
\While {no convergence}
\For {rule $r$ of $F$}
  \State $\textit{strength}(r) \gets $ firing strength of $r$ with values $v$
  \State $x \gets$ output variable of $r$
 \State $v(x) \gets \max(v(x),\textit{strength}(r))$
 \EndFor  
\EndWhile
\State return $v$
\end{algorithmic}
\end{algorithm}
\begin{theorem} \label{theo:conv}
If each aggregation function  is continuous and non-decreasing, then
Algorithm~\ref{alg:fis} converges to the solution of the input recursive FIS. 
\end{theorem}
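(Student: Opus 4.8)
The plan is to read Algorithm~\ref{alg:fis} as a Kleene-style fixed-point iteration of a single self-map of the complete lattice $[0,1]^d$, where $d$ is the number of output variables, and then invoke monotone convergence together with continuity. First I would fix an enumeration $x_1,\dots,x_d$ of the output variables, identify an assignment with the vector $(v(x_1),\dots,v(x_d))$ ordered componentwise, and let $T:[0,1]^d\to[0,1]^d$ be the operator realised by one complete pass of the inner \textbf{for} loop, i.e. the composition (in processing order) of the single-rule updates $v\mapsto$ ``$v$ with $v(x)$ replaced by $\max\!\big(v(x),\phi_r(\text{premises of }r\text{ under }v)\big)$''. Three properties are immediate and carry the whole argument: (i) $T$ maps $[0,1]^d$ into itself, since each $\phi_r$ has codomain $[0,1]$ and the updates only take maxima of values in $[0,1]$; (ii) $T$ is continuous, being a composition of $\max$, coordinate projections, and the (assumed continuous) $\phi_r$; and (iii) $T$ is inflationary, $T(v)\ge v$ componentwise, since every single-rule update can only raise a coordinate.

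The iterates are $v^{(0)}=\mathbf{0}$ and $v^{(k+1)}=T(v^{(k)})$. By (iii) they form a non-decreasing sequence, by (i) they are bounded by $\mathbf{1}$, so by the monotone convergence theorem they converge componentwise to some $v^\star\in[0,1]^d$ --- this is the ``converges'' part. By (ii), $v^\star=\lim_k T(v^{(k)})=T(v^\star)$, so $v^\star$ is a fixed point of the whole sweep; since the sweep is an inflationary composition, every single-rule update in it must also fix $v^\star$, which for a rule $r$ with conclusion $x$ is exactly $v^\star(x)\ge\phi_r(v^\star)$. Hence all rules are satisfied at $v^\star$; moreover, setting $u(x)=\max_{r:\,\mathrm{concl}(r)=x}\phi_r(v^\star)$ (taken as $0$ if there is no such rule), one checks $u\le v^\star$, that $u$ again satisfies every rule, and then --- using the minimality below --- that $u=v^\star$, so $v^\star$ carries no slack.

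For the ``the solution'' part I would show $v^\star$ is the pointwise-least assignment satisfying all rules; this is the only place the non-decreasing hypothesis is really used. Let $w$ satisfy all rules. By induction on $k$, $v^{(k)}\le w$: the base case is $\mathbf 0\le w$, and if $v^{(k)}\le w$ then for each rule $r$ with conclusion $x$, monotonicity gives $\phi_r(v^{(k)})\le\phi_r(w)\le w(x)$, so no single-rule update inside the $(k{+}1)$-st sweep can push the running vector above $w$, giving $v^{(k+1)}\le w$; passing to the limit yields $v^\star\le w$. Being the minimum of the set of assignments satisfying all rules, $v^\star$ in particular cannot have any coordinate lowered without breaking a rule, so it satisfies the definition of a solution and is ``the'' (least) one, which proves the theorem. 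I expect the only delicate point to be bookkeeping around the \textbf{sequential} (Gauss--Seidel) order of the inner loop: both the induction just given and the translation of ``$T(v^\star)=v^\star$'' into per-rule satisfaction must be run over the partially-updated vectors within a sweep, not merely between sweeps --- the saving observation being that an inflationary composition that returns to its starting point is constant, so no within-sweep interference actually occurs.
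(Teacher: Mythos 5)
Your proof is correct and follows essentially the same route as the paper's: both read the algorithm as a Kleene iteration of an inflationary, monotone, continuous self-map of $[0,1]^K$ starting from $\mathbf{0}$, use monotone bounded convergence for existence of the limit, continuity to show the limit is a fixed point, and an induction against any other satisfying assignment for minimality. The only differences are cosmetic: the paper additionally cites the Knaster--Tarski theorem (which your constructive argument renders unnecessary), while you spend extra care on the Gauss--Seidel ordering of the inner loop, a detail the paper absorbs into a single map $f$ without comment.
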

\begin{proof}
Let $K$ be the number of output variables. Let us see the assignment $v$ of Algorithm~\ref{alg:fis} as a vector $\vec{v}\in [0,1]^K$, with one value per output variable.  Each iteration of the algorithm corresponds to the update $\vec{v}\gets f(\vec{v})$ for some mapping 
$f:[0,1]^K\to [0,1]^K$.
A solution to the FIS must satisfy $\vec{v}\ge f(\vec{v})$ component-wise so that all rules are satisfied. Since $\vec{v}\le f(\vec{v})$ component-wise, this means that $\vec{v}=f(\vec{v})$, i.e., $\vec{v}$ is a fixed point of $f$. Now since $f$ is non-decreasing, because so are the aggregation functions  involved in $f$, as well as the $\textit{max}$ operator,  it follows from the Knaster-Tarski fixed point theorem \cite{tarski} that the set of fixed points is a lattice and that there is a unique least fixed point. It remains to prove that Algorithm \ref{alg:fis} converges to this least fixed point.

The algorithm computes a sequence of vectors $\vec{v_0}=0,\vec{v_1},\vec{v_2},\ldots$ by applying $f$ at each iteration.
Since $\vec{v}\le f(\vec{v})$ component-wise for any vector $\vec{v}$,
we have $\vec{v_0} \le \vec{v_1}$ component-wise. Since $f$ is non-decreasing, we deduce that $f(\vec{v_0}) \le f(\vec{v_1})$ component-wise, that is $\vec{v_1} \le \vec{v_2}$. By induction, we obtain that  the sequence $\vec{v_0}, \vec{v_1}, \vec{v_2},\ldots$ is non-decreasing. Since this sequence is upper bounded by the vector of ones, it has some limit $\vec{v}^\star$ in $[0,1]^K$, showing that the algorithm converges. 

To prove that $\vec{v}^\star$ is a fixed point of $f$, we use the continuity of $f$, which again follows from the fact that the aggregation functions involved in $f$ and the $\textit{max}$ operator are continuous.
We have:
$$
f(\vec{v}^\star) = f(\lim_{t\to \infty} \vec{v_t})
= \lim_{t\to \infty} f(\vec{v_t})
= \lim_{t\to \infty} \vec{v_{t+1}}
= \vec{v}^\star.
$$
Finally, $\vec{v}^\star$ is 
  the least fixed point of $f$ because for any other fixed point of $f$, say $\vec{v}$, we have $\vec{v_0}=0 \le \vec{v}$ and thus $\vec{v_1} = f(\vec{v_0}) \le f(\vec{v}) = \vec{v}$ component-wise.  By induction, we get $\vec{v_{t+1}} = f(\vec{v_{t}}) \le f(\vec{v}) = \vec{v}$ component-wise for all $t\ge 0$. Taking the limit  gives $\vec{v}^\star\le \vec{v}$ component-wise. 
\hfill$\Box$
\end{proof}

 Our result extends to the scenario where some or all output variables have an initial value that shall not be undercut. It suffices to add a rule of the form $\textit{c} \Rightarrow x$ for each output variable $x$ with initial value $c$.
 This allows for an alternative definition of a recursive FIS: A recursive FIS is a set of rules of the form $P_1 \wedge ... \wedge P_n \stackrel{\phi}{\Longrightarrow} C$, where $\phi$ is an aggregation function, and $P_1,...,P_n$ and $C$ are variables, each of which has an initial value in $[0,1]$. A solution is then an assignment of each variable to a value so that (1) all variables have values that are larger than or equal to their initial value, (2) all rules are satisfied, and (3) no variable can be assigned a smaller value and still satisfy all rules. Such an alternative FIS can be translated to a FIS according to Definition~\ref{def:recfis} by considering all variables that appear in conclusions as output variables, considering the others as input variables, and adding a rule $c\Rightarrow x$ for any output variable $x$ with initial value $c$.



\section{FLORA} \label{sec:method}

We now apply our framework of Fuzzy Logic to the task of KG alignment. We first extend the notion of functionality to relation lists, which is exploited in FLORA to align  entities that are objects of  multiple facts.


\subsection{Relation lists}

For an alphabetically sorted list $R=(r_1,\ldots,r_n)$ of relations, a list of head-entities $H=(h_1,\ldots,h_n)$, and a tail entity $t$, we define $R(H,t)$ as the boolean proposition that all corresponding facts exist:
$$R(H,t) := r_1(h_1,t)\land \ldots \land r_n(h_n,t).
$$
By convention, the head entities corresponding to the same relation are   sorted in alphabetic order. This ensures that there is a unique way to write $R(H,t)$ even if several relations in $R$ are identical. We can then define the functionality of the relation list $R$ as: 
$$
\textit{fun}(R) = \frac{\left| \{ H \mid \exists t,\ R(H, t)\} \right|}
{\left| \{ (H, t) \mid R(H,t)  \} \right|}
$$
For instance, the relations {\it BirthDateOf} and {\it FamilyNameOf} are themselves not functional, but their combination is, as few people have both the same birth date and family name.
Similarly, the local functionality of relation list $R$ and entity list $H$ is defined as:
\[
\textit{fun}(R, H) = \frac{1}
{\left| \{ t \mid R(H, t) \} \right|}
\]
We have $\textit{fun}(R, H)=1$ if and only if there is a unique tail entity $t$ such that $R(H,t)$ is {\it true}. Again, this notion is used to cope with relation lists that are globally functional, but have local exceptions.




\subsection{ Alignment Rules}\label{sec:matchingrules}

In order to align two KGs $\mathcal{G}$ and $\mathcal{G'}$, we construct a recursive Fuzzy Inference System. The key insight, similar to PARIS \cite{paris}, is the following: If there exist facts $r(h,t)$ and $r'(h',t')$ such that
entities  $h, h'$ have already been matched and  $r,r'$ are functional, then entities $t,t'$ are matched. 
FLORA goes beyond PARIS by generalizing this insight to lists of relations, by embedding it in the framework of Fuzzy Logic, and by proving its convergence.

\paragraph{\textbf{Entity alignment}.} 
 For each pair of entities $t\in \mathcal{E}$ and $t'\in \mathcal{E}'$ that are not literals, and for each list of relations $R$  and list of head entities $H$ with $R(H,t)$ in $\mathcal{G}$, and for each list of relations $R'$  and list of head entities $H'$ with $R'(H',t)$ in $\mathcal{G'}$, we use:
\begin{align}
    R(H, t) &\land R'(H', t') \land H \equiv H' \land R \cong R' \nonumber \\
    &\land \textit{fun}(R)  \land \textit{fun}(R,H) \land \textit{fun}(R') \land \textit{fun}(R', H') 
    \quad \stackrel{\text{min}}{\Longrightarrow} \quad t \equiv t'
    \label{eq:generic}
\end{align}
Here, $R(H, t)$ and $R'(H', t')$ are input variables whose value is, by construction,~1. $H \equiv H'$ is an output variable that is implied by the equivalences of each pair of head entities of $H$ and $H'$ through the following rule:
\begin{equation*}
\label{eq:entityset}
   h_1\equiv h'_1\land \ldots \land h_n\equiv h'_n \quad \stackrel{\text{hmean}}{\Longrightarrow} \quad H \equiv H'
   \end{equation*}
The premises of this rule are either themselves output variables of other rules, or   literals, whose similarity is computed upfront (see Section~\ref{sec:algo}). The aggregation function of this rule is the harmonic mean \textit{hmean}, which allows taking into account high evidence (which would be ignored if we used the minimum) while penalizing strong conjunctions with low evidence (unlike the arithmetic mean). 

The term $R \cong R'$ is an output variable that means ``$R$ is similar to $R'$''. It is constrained by the following rule:
\begin{equation*}
\label{eq:relset}r_1\cong r'_1 \land  \ldots \land  r_n\cong r'_n \quad \stackrel{\text{hmean}}{\Longrightarrow}\quad R \cong R'
\end{equation*}
Here, each statement of the form $r\cong r'$ is itself an output variable, which is implied by two rules as follows:
\begin{align*}
\label{eq:sub}r \subseteq r' \quad {\Longrightarrow} \quad r\cong r'\\
r' \subseteq r \quad {\Longrightarrow} \quad r\cong r'
\end{align*}
This means that two relations are considered similar if one is a subrelation of the other (and aggregation is the identity function as there is only one premise).

The functionality terms  are input variables whose values correspond to the functionality or local functionality of the respective lists of relations.  Note that we cannot resort to local functionality only, due to the incompleteness of KGs.
For example, people generally have two parents. If one person happens to have only a single parent in one KG, and a single parent in the other, this does not entitle us to match these two parents (as one could be, e.g, the mother and the other the father). The imposition of global functionality avoids such matches.
We cannot use only global functionality either, as a globally functional relationship (such as \emph{has\-Capital}) can be non-functional in some cases (such as \textit{South Africa}).

Finally, $t\equiv t'$ is an output variable that determines to what degree $t$ and $t'$ are matched. We use the \textit{min} aggregation function to make sure all premises of the rule are fulfilled (and a strong premise cannot make up for a weak one). 



\paragraph{\textbf{Subrelation alignment}.} To find that one relation $r \in \mathcal{R}$ is a subrelation of another relation $r' \in \mathcal{R'}$, we first build the following rule for all entities $h, t \in \mathcal{E}, h', t' \in \mathcal{E'}$ with $r(h, t), r'(h',t')$:
 $$
    r(h, t) \land r'(h', t') \land h \equiv h' \land t \equiv t'  \quad \stackrel{\text{min}}{\Longrightarrow} \quad r\equiv_{h,t}r'
    $$
As before, $r(h, t), r'(h',t')$ are input variables that evaluate to 1 by construction, while  $h \equiv h'$ and $t \equiv t'$ are output variables of Equation \ref{eq:generic}. Then $r\equiv_{h,t}r'$ is an output variable that says that $r$ and $r'$ coincide on $h$ and $t$. The subrelation alignment can then be written as:
\begin{equation}
\label{eq:subrel}
\bigwedge_{h,t:\, r(h,t)}  r\equiv_{h,t}r' 
\quad 
\stackrel{\alpha\text{-mean}}{\Longrightarrow}
\quad 
r \subseteq r' 
\end{equation}
This rule means that $r$ is a subrelation of $r'$ if all facts of $r$ are also facts of $r'$. To make the degree of $r \subseteq r'$ reflect the proportion of facts of $r$ that are facts of $r'$, we use the arithmetic mean as the aggregation function. However, the arithmetic mean alone is not enough: Under the Open World Assumption, the KG $\mathcal{G}'$ can be incomplete. Thus, some facts $r'(h',t')$ might be missing, which would jeopardise our subrelation alignment. Therefore, we multiply the arithmetic mean by a constant $\alpha$, which we call the \emph{benefit of the doubt} (we show different values of $\alpha$ in our experiments). The aggregation function $\alpha$-mean is then the arithmetic mean multiplied by $\alpha$ and capped to $[0,1]$.

%
\subsection{Algorithm} 
\label{sec:algo}


Our method for KG alignment is shown in Figure~\ref{fig2}. It receives as input two knowledge graphs to be matched, optionally with training data. 

\begin{figure}
\includegraphics[width=\textwidth]{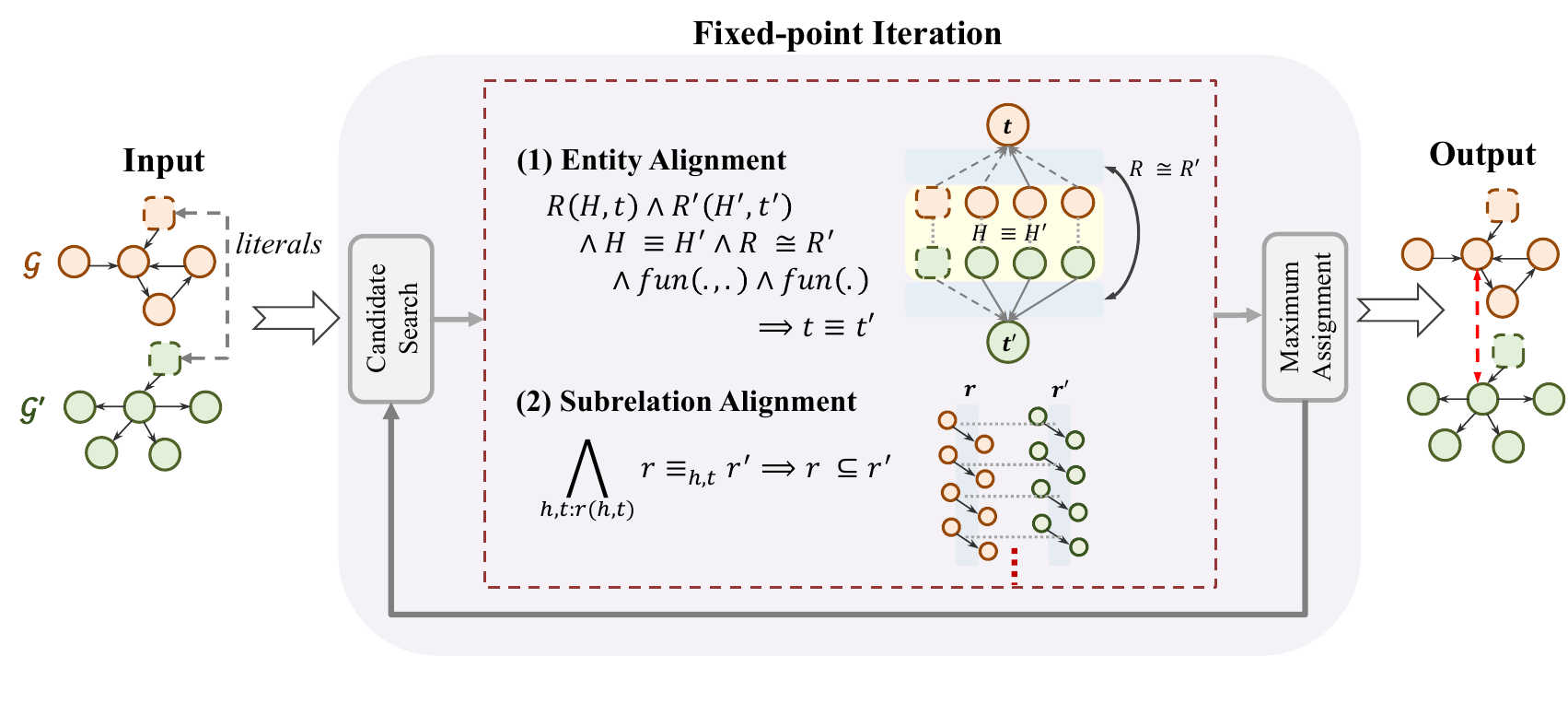}
\caption{Knowledge Graph Alignment with FLORA.} \label{fig2}
\end{figure}

\paragraph{\textbf{Initialization.}} 
We first   compute the similarities between the literals of the two KGs (see details below).  All  literal similarity values are computed only once, belong to $[0,1]$,  and remain fixed throughout the algorithm. 
We integrate training data in the same way: If two entities are matched in the training data, we set their matching score to 1, and treat this match as an input variable that will never be changed.
To bootstrap the rule of entity alignment of Equation~\ref{eq:generic}, relation similarities are initially set to some small value $\theta_r$. This value is  superseded by the value of the subrelation alignment of Equation~\ref{eq:subrel} in the following iterations. 

\ignore{ 
\begin{itemize}
\item Init the matching $e\equiv e' > \theta_1$ for all literals.
    \item Init $e\equiv e'$ using \eqref{eq:generic} for simple facts (entity lists and relation lists of size 1) and $r\cong r'$ set to some value $\theta_0 \in (0,1)$ for all $r\in \mathcal{R}, r'\in \mathcal{R'}$
    \item Reset $r\cong r'$ with the subrelation alignment  \eqref{eq:subrel}.
\end{itemize}
}

\paragraph{\textbf{Fixed point iteration.}}
After initialization, the steps of entity alignment (Equation~\ref{eq:generic}) and subrelation alignment (Equation~\ref{eq:subrel}) are applied alternately.
If multiple rules imply the same output variable, its score is the maximum of the rule strengths, in line with Algorithm~\ref{alg:fis}. 
Since all our aggregation functions are continuous and non-decreasing, Theorem~\ref{theo:conv}  applies after the first iteration (when initial values have been set for both entity and relation alignments), proving the convergence of the iterative process to the solution of the corresponding FIS.  

\subsection{Implementation Details} 
\label{sec:details}


\paragraph{\textbf{Similarity between literals.}}
The similarity between strings is 
computed by a (small) language model. To reduce the computational cost,  we identify, for each string literal, its most similar string counterparts in the other KG by  cosine similarity, and retain only those above some threshold $\theta_s$.
Dates are  matched exactly, and numbers are matched with a similarity score of 1 if they are approximately equal within a relative error of $10^{-9}$. \added{While more complex matching strategies for dates are possible, experiments in Section~\ref{sec:experiments} show that FLORA performs well even with simple exact matching.}

\paragraph{\textbf{Candidate Search.}} Performing a full search over all possible entity and relation combinations would be prohibitively expensive. Therefore, we first perform a candidate search to identify the most structurally similar counterpart for each entity.
Specifically, we select for each  entity $t\in \cal{E}$ those entities $t'\in \cal{E}'$ with the maximum number of counterpart triples. These triples are ranked in decreasing order of  matching scores and then considered in this order. 

\paragraph{\textbf{Maximum Assignment.}} We assume that there are no duplicate entities within each KG, i.e., each entity $e\in {\cal E}$ has at most one counterpart $e'\in {\cal E'}$. Therefore, we store only the highest-scored alignments in the matching set:
\begin{equation}
    \label{eq:max_assign}
    \mathcal{M}_e \leftarrow 
    \left\{ (e, e') \;\middle|\; 
    e \equiv e' = 
    \max \left\{ 
    \max_{e'' \in \mathcal{E}'}(e \equiv e''),\;
    \max_{e''' \in \mathcal{E}}(e''' \equiv e')
    \right\} \right\}
\end{equation}
We do not impose this one-to-one assumption on relations, and thus store all relation pairs $r\subseteq r'$ with scores greater than 0 in $\mathcal{M}_r$.





\paragraph{\textbf{Final alignment.}} The iterations terminate when the total matching score increases by less than some threshold $\varepsilon$ (early stopping).
We keep only those entity matches that exceed a threshold $\theta_e \in [0,1]$. 
The final alignment between two relations $r$ and $r'$ can be a subrelation $r\subseteq r'$, a superrelation $r\supseteq r'$, or \added{an equivalence $r\equiv r'$ if both $r\subseteq r'$ and $r'\subseteq r$ holds.}


\section{Experiments} \label{sec:experiments}

\subsection{Experimental Setup}
\paragraph{\textbf{Datasets.}} We evaluate FLORA on five widely-adopted entity alignment datasets, including two monolingual datasets (D-W-15K-V1/V2) from OpenEA~\cite{openea} and three cross-lingual datasets from DBP15K~\cite{jape}. D-W-15K is  based on DBpedia and Wikidata and comes in two versions: Sparse version (D-W-15K-V1) and Dense version (D-W-15K-V2). DBP15K consists of three cross-lingual KG pairs extracted from DBpedia: Chinese and English (DBP\textsubscript{ZH-EN}), Japanese and English (DBP\textsubscript{JA-EN}), French and English (DBP\textsubscript{FR-EN}). For holistic evaluation, we also include two datasets from the OAEI Knowledge Graph Track~\cite{oaeikg}, memoryalpha-stexpanded (Mem-ST) and starwars-swtor (Star-SWT), which have non-trivial matches for relations, classes, and instances.

\paragraph{\textbf{Parameters.}} As in PARIS~\cite{paris}, the relation similarity of FLORA is initialized to $\theta_r=0.1$. We choose the benefit of doubt as $\alpha=3$ (we show other values in the ablation study). We choose $\theta_s=0.7$ and $\theta_e=0.1$, which maximizes the F1 value in our experiments 
(a smaller value favors recall, a larger value favors precision). Early stopping is set to $\varepsilon=0.01$ (a smaller value improves precision but increases the execution time). 
Regarding the  language models used for string similarity,  we use  LaBSE\footnote{\url{https://huggingface.co/sentence-transformers/LaBSE}} ~\cite{labse} for the multilingual datasets, and PEARL\footnote{\url{https://huggingface.co/Lihuchen/pearl_small}}  \cite{pearl}
for the monolingual datasets \added{(because of its good performance on phrases)}.

\paragraph{\textbf{Evaluation Metrics.}} On D-W-15K-V1/V2 and the OAEI KGs Track, we use standard classification-based metrics (Precision, Recall, F1), while on DBP15K, we follow existing EA baselines by reporting Hit@K and MRR, which measure top-K accuracy and mean reciprocal rank, respectively. To evaluate our model in ranking-based metrics, we directly output all possible entity pairs with scores instead of selecting the top-scoring ones. 

\paragraph{\textbf{Competitors.}}
We select more than 30 competitors. 
For the EA task, we include the supervised methods JAPE~\cite{jape}, BootEA~\cite{bootea}, RDGCN~\cite{rdgcn}, OntoEA~\cite{ontoea}, PARIS+~\cite{paris_plus}, RHGN~\cite{rhgn}, GCNAlign~\cite{gcnalign}, TransEdge~\cite{transedge}, FuzzyEA~\cite{fuzzyea}, SelfKG~\cite{selfkg}, AttrGNN~\cite{attrgnn}, TEA~\cite{tea}, SDEA~\cite{sdea}, LLMEA~\cite{llmea}, ChatEA~\cite{chatea} and the unsupervised methods AttrE~\cite{attre}, PARIS~\cite{paris}, PRASE~\cite{prase}, FGWEA~\cite{prase}, NALA~\cite{nala}, LLM4EA~\cite{llm4ea}, CPL-OT~\cite{cpl-ot}, MultiKE~\cite{multike}, ICLEA~\cite{iclea}, BERT-INT~\cite{bert-int}, HLMEA~\cite{hlmea}, and EmbMatch, a simple method using embedding similarity between entity names. For BERT-INT, which uses entity descriptions, we replace the descriptions with entity names for a fair comparison, following~\cite{sdea}.
For holistic KG alignment, we choose classical methods LogMap~\cite{logmap}, AML~\cite{aml}, and the most recent ATMatcher~\cite{atbox}, ALOD2Vec~\cite{alod2vec}, and Wiktionary~\cite{wiktionary}.
We report the results available from the original papers.


\subsection{Overall Results}  

\begin{table}[t]
    \centering
    \caption{Performance comparison on the OpenEA datasets (monolingual)}
    \label{tab:openea}
    \begin{adjustbox}{max width=\textwidth}
    \begin{tabular}{clcccccc}
        \toprule
        \multirow{2}{*}{\textbf{Category}} & \multirow{2}{*}{\textbf{Method}} & 
        \multicolumn{3}{c}{\textbf{D-W-15K-V1}} & 
        \multicolumn{3}{c}{\textbf{D-W-15K-V2}} \\
        \cmidrule(lr){3-5} \cmidrule(lr){6-8} 
        & & Precision & Recall & F1 & Precision & Recall & F1 \\
        
        \midrule
        \multirow{6}{*}{\rotatebox{90}{Supervised}} 
        & JAPE & 0.250 & 0.250 & 0.250 & 0.262 & 0.262 & 0.262 \\
        & BootEA & 0.572 & 0.572 & 0.572 & 0.821 & 0.821 & 0.821 \\
        & RHGN  & 0.560 & 0.753 & 0.644 & - & - & - \\
        & OntoEA & 0.591 & 0.591 & 0.591 & 0.814 & 0.814 & 0.814 \\
        & PARIS\texttt{+} & \underline{0.965} & \underline{0.746} & \underline{0.841} & \underline{0.976} & \underline{0.903} & \underline{0.938} \\
        
        & \textbf{Ours} & \textbf{0.967} & \textbf{0.856} & \textbf{0.908} & \textbf{0.985} & \textbf{0.966} & \textbf{0.975} \\        
        \midrule
        \multirow{8}{*}{\rotatebox{90}{Unsupervised}}
        & EmbMatch & 0.461 & 0.452 & 0.456 & 0.515 & 0.509 & 0.512 \\
        & AttrE & 0.299 & 0.299 & 0.299 & 0.489 & 0.489 & 0.489 \\
        & PARIS & \underline{0.953} & 0.726 & 0.824 & 0.950 & 0.850 & 0.897 \\
        & PRASE & 0.918 & \underline{0.809} & \underline{0.860} & 0.948 & 0.900 & 0.923 \\ 
        & FGWEA & 0.933 & 0.725 & 0.816 & \underline{0.952}  & 0.903 & \underline{0.927} \\ 
        & NALA & - & - & - & 0.917  & \underline{0.908}  & 0.912 \\
        & LLM4EA & 0.472 & 0.472 & 0.472 & 0.907 & 0.816 & 0.859 \\
        & \textbf{Ours} & \textbf{0.954} & \textbf{0.838} & \textbf{0.893} & \textbf{0.978} & \textbf{0.947} & \textbf{0.962}\\
        \bottomrule
    \end{tabular}
    \end{adjustbox}
\end{table}
\begin{table*}[ht]
\centering
\caption{Performance comparison on the DBP15K dataset across three language pairs (ZH-EN, JA-EN, FR-EN).  Group (I) are structure-only methods, Group (II) uses both structure and entity names, Group (III) jointly uses relational triples, attribute triples, and entity names, and Group (IV) is LLM-based.
}
\begin{adjustbox}{max width=\textwidth}
\begin{tabular}{cllccc|ccc|ccc}
\toprule
\multirow{2}{*}{\textbf{Group}} & \multirow{2}{*}{\textbf{Method}} & \multirow{2}{*}{\,\,\,\,\,\,} & \multicolumn{3}{c}{\textbf{DBP\textsubscript{ZH-EN}}} & \multicolumn{3}{c}{\textbf{DBP\textsubscript{JA-EN}}} & \multicolumn{3}{c}{\textbf{DBP\textsubscript{FR-EN}}} \\
 \cmidrule(lr){4-6} \cmidrule(lr){7-9} \cmidrule(lr){10-12} 
& & & Hit@1 & Hit@10 & MRR & Hit@1 & Hit@10 & MRR & Hit@1 & Hit@10 & MRR \\
\midrule
\multirow{3}{*}{I}
& GCNAlign  & S & 0.413 & 0.744 & 0.549 & 0.399 & 0.745 & 0.546 & 0.373 & 0.745 & 0.532 \\
& BootEA    & SS & 0.629 & 0.848 & 0.703 & 0.622 & 0.854 & 0.701 & 0.653 & 0.874 & 0.731 \\
& TransEdge & S & 0.735 & 0.919 & 0.801 & 0.719 & 0.932 & 0.795 & 0.710 & 0.941 & 0.796 \\

\midrule
\multirow{5}{*}{II}
& RDGCN     & S & 0.708 & 0.846 & 0.749 & 0.767 & 0.895 & 0.812 & 0.886 & 0.957 & 0.908 \\
& FuzzyEA   & S & 0.863 & 0.984 & 0.909 & 0.898 & 0.985 & 0.933 & 0.977 & 0.998 & 0.986  \\
& SelfKG    & U & 0.745 & 0.866 & 0.782 & 0.816 & 0.913 & 0.844 & 0.957 & 0.992 & 0.971  \\
& CPL-OT    & U & 0.911 & 0.950 & 0.930 & 0.945 & 0.976 & 0.960 & 0.986 & 0.992 & 0.990  \\
& SEU       & U & 0.900 & 0.965 & 0.924 & 0.956 & \underline{0.991} & 0.969 & 0.988 & \underline{0.999} & 0.992  \\
\midrule
\multirow{8}{*}{III}
& AttrGNN   & S & 0.796 & 0.929 & 0.845 & 0.783 & 0.921 & 0.834 & 0.919 & 0.978 & 0.910 \\
& TEA       & S & 0.941 & 0.983 & 0.960 & 0.941 & 0.979 & 0.959 & 0.979 & 0.997 & 0.991 \\
& SDEA      & S & 0.871 & 0.966 & 0.912 & 0.848 & 0.952 & 0.890 & 0.969 & 0.995 & 0.981 \\
& MultiKE   & U & 0.509 & 0.576 & 0.532 & 0.393 & 0.489 & 0.432 & 0.639 & 0.712 & 0.665  \\
& ICLEA     & U & 0.884 & 0.972 & -     & 0.924 & 0.978 & -     & 0.991 & 0.999 & -     \\
& BERT-INT$_{name}$  & U & 0.814 & 0.835 & 0.820 & 0.806 & 0.835 & 0.821 & 0.987 & 0.992 & 0.990 \\
& FGWEA    & U & \textbf{0.976} & \textbf{0.994} & \textbf{0.983} & \textbf{0.978} & \textbf{0.992} & \textbf{0.988} & \textbf{0.997} & \textbf{0.999} & \textbf{0.998} \\

& \textbf{Ours} & U & \textbf{0.976} & \underline{0.988} & \underline{0.981} & \underline{0.967} & 0.979 & \underline{0.972} & \underline{0.992} & \underline{0.997} & \underline{0.993} \\

\midrule
\multirow{3}{*}{IV}
& HLMEA & U & 0.930 & - & 0.934 & 0.938 & - & 0.950 & \textbf{0.986} & - & 0.989 \\
& LLMEA & S & 0.890 & 0.923 & - & 0.911 & 0.946 & - & 0.957 & 0.977 & - \\
& ChatEA & S & \textbf{0.980} & - & \textbf{0.984} & \textbf{0.985} & - & \textbf{0.993} & - & - & - \\
\bottomrule
\end{tabular}
\end{adjustbox}
\label{tab:dbp15k_results}
\end{table*}

\begin{table*}[ht]
\centering
\caption{Performance comparison on datasets from OAEI}
\begin{adjustbox}{max width=\textwidth}
\begin{tabular}{p{0.4cm}l|ccc|ccc|ccc|ccc}
\toprule
& \textbf{ } & \multicolumn{3}{c|}{\textbf{Class}} & \multicolumn{3}{c|}{\textbf{Relation}} & \multicolumn{3}{c|}{\textbf{Instance}} & \multicolumn{3}{c}{\textbf{Overall}}\\
\textbf{} & \textbf{} & Prec. & Rec. & F1 & Prec. & Rec. & F1 & Prec. & Rec. & F1 & Prec. & Rec. & F1 \\
\midrule
\multirow{7}{*}{\rotatebox{90}{Mem-ST}} 
& BaselineLabel & 1.00  & 0.46  & 0.63  & 0.97 & 0.68 & 0.80 & 0.98	& 0.84 & 0.91 & 0.98	& 0.83 & 0.90\\
& LogMap & 0.78	& 0.54 & 0.64 & -  & -  & -  & 0.88	& 0.77 & 0.82 & 0.88 & 0.75	& 0.81 \\
& AML & 1.00 & 0.69 & 0.82 & 0.89 & 0.80 & 0.85	& 0.93	& 0.93	& 0.93  & 0.93	& 0.92 & 0.93	\\
& ATMatcher & 1.00	& 0.77 & 0.87	& 0.95  & 0.95 & 0.95  & 0.96	& 0.92  & 0.94	& 0.96	& 0.92 & 0.94	\\
& ALOD2Vec & 1.00 & 0.62 & 0.76 & 0.87	& 0.95 & 0.91 & 0.92	& 0.93	& 0.93 & 0.92	& 0.93 & 0.92 \\
& Wiktionary & 1.00 & 0.62 & 0.76 & 0.87 & 0.95 & 0.91	& 0.92	& 0.93	& 0.93 & 0.92	& 0.93 & 0.92 \\
& \textbf{Ours} & \textbf{1.00} & \textbf{0.85} & \textbf{0.92} & \textbf{0.98} & \textbf{0.98} & \textbf{0.98} & \textbf{0.96}  &  \textbf{0.95}  & \textbf{0.95}  & \textbf{0.96} & \textbf{0.95} & \textbf{0.95}\\

\midrule
\multirow{7}{*}{\rotatebox{90}{Star-SWT}} 
& BaselineLabel & 1.00	& 0.80 & 0.89 & \textbf{1.00} &	0.89 & 0.94  & 0.95	& 0.84 & 0.89	& 0.95	& 0.84 & 0.89	\\
& LogMap & 	1.00 &	0.73 &  0.85 & -  & -  & -  & 0.94	& 0.78 & 0.86 & 0.94 & 0.75
& 0.84 \\
& AML & 1.00	& 0.87 & 0.93 & 0.98 & 0.71  & 0.82	& 0.93	& 0.90 & 0.92 & 0.93	& 0.90 & 0.91	\\
& ATMatcher & 1.00	& 0.87 & 0.93	& \textbf{1.00}	& 0.98 & \textbf{0.99} & 0.94 & 0.91 & 0.92  & 0.95	& 0.91 & 0.93	\\
& ALOD2Vec & 1.00	& 0.87 & 0.93	& 0.98	& 0.98	& 0.98  & 0.92	& 0.91 & 0.92	& 0.93	& 0.92  & 0.92	\\
& Wiktionary & 1.00	& 0.87 & 0.93	& 0.98	& 0.98	& 0.98  & 0.92	& 0.91  & 0.92 & 0.93	& 0.92	& 0.92\\
& \textbf{Ours} & \textbf{1.00}  & \textbf{1.00}  & \textbf{1.00} & 0.98  & \textbf{0.98}  & 0.98  & \textbf{0.98}  &  \textbf{0.96}  & \textbf{0.97}  & \textbf{0.98} & \textbf{0.96} & \textbf{0.97}\\

\bottomrule
\end{tabular}
\end{adjustbox}
\label{tab:ontology_matching}
\end{table*}

Table~\ref{tab:openea} shows the results  on monolingual EA in both unsupervised and supervised settings.  FLORA achieves considerable improvements over all baselines,  with and without training data. Table~\ref{tab:dbp15k_results} shows the results for multi-lingual EA. FLORA consistently performs the best or the second-best across all datasets. 
FLORA does not beat FGWEA, which is designed for multilingual datasets and exploits the name bias present in such benchmarks~\cite{attrgnn}. Therefore, FGWEA performs worse than FLORA when entity names are incomplete, as observed in the monolingual EA datasets. FGWEA further assumes that each entity of one KG has one match in the other KG, which is not representative of  real-world scenarios. 
Regarding  ChatEA, it is supervised and additionally uses the background knowledge of LLMs to generate entity descriptions. In contrast, FLORA is unsupervised, relies solely on the given KGs data, and needs only a small language model for string similarity. 
Overall, the difference of FLORA to the best performer is less than 1 percentage point -- a price we pay for a fully transparent unsupervised algorithm that works across different scenarios. 

Table~\ref{tab:ontology_matching} further shows the performance of FLORA on the KG alignment task. 
FLORA achieves the highest overall F1 score with an average of 96\%,  outperforming competitors for the alignment of classes, relations, and instances. 
It  falls within 2 percentage points of the best model for relation matching precision on the Star-SWT dataset. 
This slight drop is because FLORA finds structural matches (\textit{apprentice} matches \textit{padawan}, which also means apprentice) rather than the intended semantic matches (\textit{apprentice} matches \textit{apprentices}).



\subsection{Analysis}
\begin{table*}[t]
\centering
\caption{Ablation studies of FLORA on various datasets 
}
\begin{adjustbox}{max width=\textwidth}
\begin{tabular}{lccc|ccc|ccc|ccc}
\toprule
\multirow{2}{*}{\textbf{Method}} & \multicolumn{3}{c|}{\textbf{D-W-15K-V1}} & \multicolumn{3}{c|}{\textbf{D-Y-15K-V2}} & \multicolumn{3}{c|}{\textbf{DBP\textsubscript{FR-EN}}} & \multicolumn{3}{c}{\textbf{DBP\textsubscript{ZH-EN}}} \\
& Prec. & Rec. & F1 & Prec. & Rec. & F1 & Hit@1 & Hit@10 & MRR & Hit@1 & Hit@10 & MRR \\
\midrule
\textbf{FLORA}     & \textbf{0.954}     & \textbf{0.838}     & \textbf{0.893}  & \textbf{0.978}     & \textbf{0.947}     & \textbf{0.962}  & \textbf{0.992} & \textbf{0.997} & \textbf{0.993}  & \textbf{0.976} & \textbf{0.988} & \textbf{0.981} \\
\quad - w/o literal similarity            & 0.939     & 0.797    & 0.862  & 0.962     & 0.925     & 0.943   & 0.975     & 0.983     & 0.977     & 0.947& 0.965& 0.951\\
\quad - w/o list of relations         & 0.894     & 0.604    & 0.721  & 0.918     & 0.628    & 0.756  & 0.933 & 0.964& 0.944& 0.700     & 0.852     & 0.757  \\
\quad - with \textit{min} aggregation  & 0.564  & 0.524  & 0.543  & 0.659 & 0.626 & 0.641 & 0.894  & 0.949  & 0.912 & 0.547  & 0.778  & 0.632 \\
\quad - with $\alpha=100$ & 0.940 & 0.807  & 0.868 & 0.956  & 0.928 & 0.937 & 0.974 & 0.985  & 0.976 & 0.871 & 0.935 & 0.898 \\
\bottomrule
\end{tabular}
\end{adjustbox}
\label{tab:ablation}
\end{table*}
\begin{figure}[t]
    \centering
    \begin{subfigure}[b]{0.328\textwidth}
        \includegraphics[width=\textwidth]{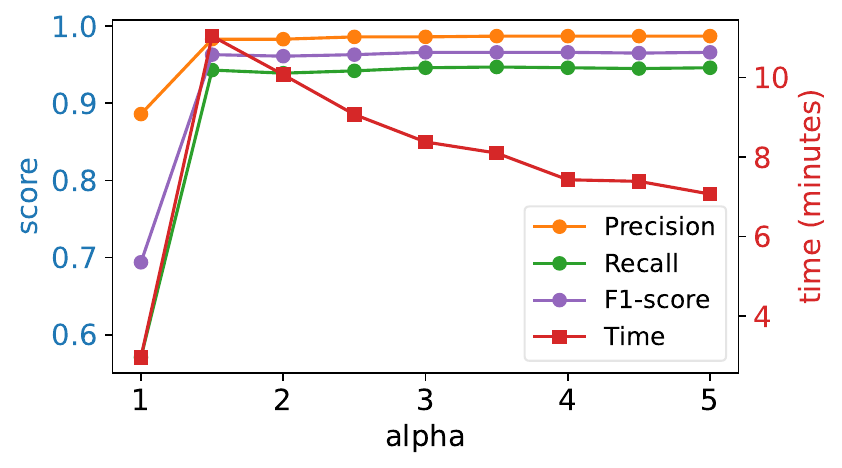}
        \caption{D-W-15K-V2}
    \end{subfigure}
    \begin{subfigure}[b]{0.328\textwidth}
        \includegraphics[width=\textwidth]{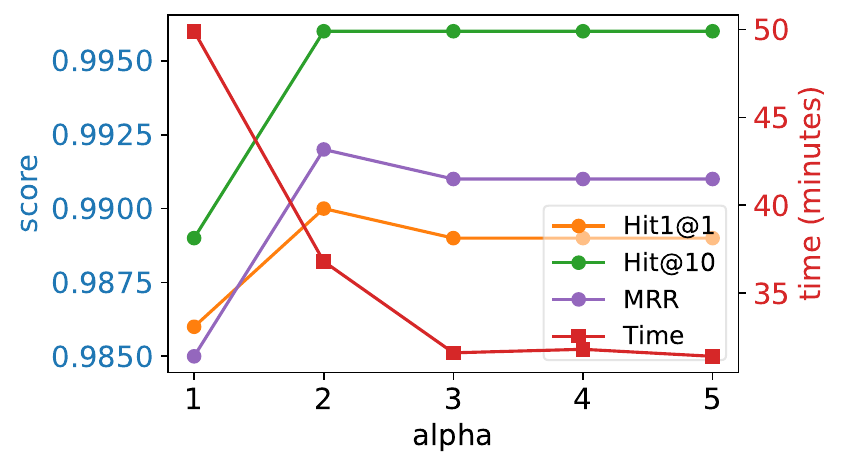}
        \caption{DBP\textsubscript{FR-EN}}
    \end{subfigure}
    \begin{subfigure}[b]{0.328\textwidth}
        \includegraphics[width=\textwidth]{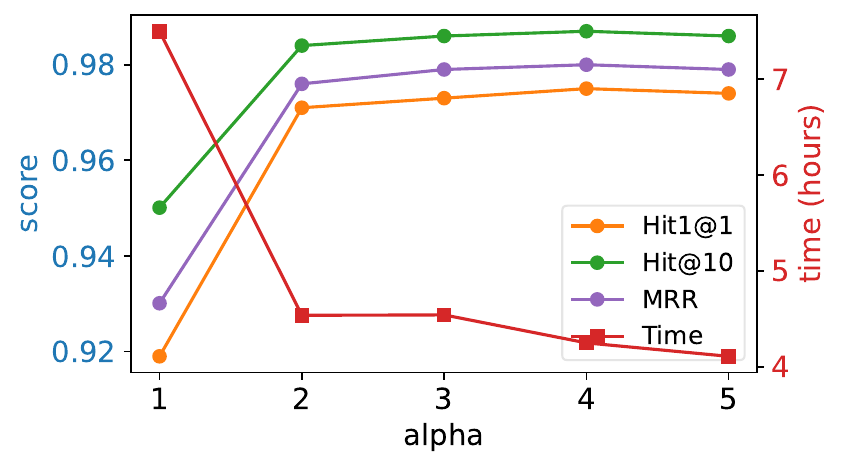}
        \caption{DBP\textsubscript{ZH-EN}}
    \end{subfigure}
    \caption{Sensitivity to the benefit of doubt $\alpha$}\vspace{-5mm}
    \label{fig:dbp15k_subfig}
\end{figure}

\paragraph{\textbf{Ablation study}.}
 We first experimented with different values of the benefit-of-doubt parameter $\alpha$ (see Figure \ref{fig:dbp15k_subfig}). 
We find that setting $\alpha = 1$ is ineffective and leads to a significant reduction in performance, thus proving the necessity of $\alpha>1$. 
Beyond $\alpha=2$, the performance does not change much while computation time first decreases and then remains stable. However, a larger $\alpha$ does not necessarily yield better results, as the performance degrades once $\alpha$ goes beyond 100 (see Table~\ref{tab:ablation}). In practice, we set $\alpha=3$ to balance effectiveness and efficiency.
We also replaced the language model used for string similarity by simple string identity (see Table \ref{tab:ablation}). 
This leads to an overall performance drop, especially on DBP\textsubscript{ZH-EN}, where entities have very different literal values across languages. But even with identity literal initialization, the score drops by 3 percentage points at most, and FLORA still outperforms strong baselines such as RDGCN and SelfKG, underscoring the strength of our structural matching.
Next, we restrict the list of relations in entity alignment (Equation~\ref{eq:generic}) to length 1, thereby removing the structural signal from our rules. In this setting, our system can propagate only alignment scores through flat triples, similar to  PARIS~\cite{paris}. This significantly degrades performance, demonstrating the importance of structural information in the absence of functional relations. 
Finally, using the \textit{min} aggregation function  for all inference rules significantly degrades the performance across all datasets.
\paragraph{\textbf{Explainability.}} A salient feature of FLORA is that its results are interpretable by humans. For any pair of aligned entities, the corresponding rule with the highest firing strength, as given by Equation~\ref{eq:generic}, explains the alignment. In the dataset D-W-15K-V1 for instance, the  entities $t$=\href{https://dbpedia.org/page/Lady_Gaga}{\textit{Lady Gaga}}  of DBPedia and $t'$=\href{https://www.wikidata.org/wiki/Q19848}{Q19848} of Wikidata are aligned with a value of $0.984$. The rule leading to this alignment states  that Lady Gaga was born on \textit{1986-03-28} and featured in the song \textit{3-Way}; it is the alignment of the relations \textit{musicalArtist} $\cong$ P175
and 
\textit{birthDate}$^{-1}\cong$ P569$^{-1}$, the functionality of the corresponding relation set, and  the alignment of the birth dates and of the entities \textit{3-Way} and Q659417, that lead to the final alignment of \textit{Lady Gaga} and Q19848. 


%

\paragraph{\added{\textbf{Scalability and Efficiency.}}}

\begin{table}[t]
    \centering
    \caption{Performance comparison on the OpenEA 100K datasets (monolingual)}
    \label{tab:larger}
    \begin{adjustbox}{max width=\textwidth}
    \begin{tabular}{lccccccc}
        \toprule
        \multirow{2}{*}{\textbf{Method}} & \multirow{2}{*}{\textbf{Supervised}} & 
        \multicolumn{3}{c}{\textbf{D-W-100K-V1}} & 
        \multicolumn{3}{c}{\textbf{D-W-100K-V2}} \\
        \cmidrule(lr){3-5} \cmidrule(lr){6-8} 
        & & Precision & Recall & F1 & Precision & Recall & F1 \\
        
        \midrule
        RDGCN & Yes & 0.362  & 0.362 & 0.362 & 0.421 & 0.421 & 0.421 \\
        BootEA & Yes & 0.516 & 0.516 & 0.516 & 0.766 & 0.766 & 0.766 \\ 
        \midrule
        PARIS & No & 0.934 & 0.644 & 0.762 & 0.933 & 0.796 & 0.859 \\
        PRASE & No & - & - & - & 0.927 & 0.855 & 0.890 \\
        NALA & No & 0.732 & 0.732 & 0.732 & 0.919 & 0.919 & 0.919\\
        \textbf{FLORA} & No & \textbf{0.937} & \textbf{0.773} & \textbf{0.847} & \textbf{0.970} & \textbf{0.929} & \textbf{0.949} \\
        \bottomrule
    \end{tabular}
    \end{adjustbox}
\end{table}

Finally, we evaluate the scalability and efficiency of our approach. For scalability, we use a larger-scale entity alignment dataset from OpenEA, D-W-100K (with 100k gold alignments). As shown in Table~\ref{tab:larger}, FLORA outperforms both advanced unsupervised and supervised baselines. 

For efficiency, \added{we measured the run time of FLORA on datasets of different sizes from our previous experiments, using a Linux server with an AMD EPYC 9374F CPU and 519 GB of RAM.}
FLORA takes approximately 7 min on D-W-15K(V1/V2) and 48 min on the larger dataset D-W-100K-V1. This is comparable to most state-of-the-art methods~\cite{openea}. However, the other methods are executed in GPUs, while our approach needs only CPUs. 
On the DBP15K datasets, FLORA is much slower than on the D-W-15K datasets, with around 2 hours on average. 
This is likely due to the amount of auxiliary information (i.e., attribute values), which is about five times larger in the DBP15K datasets than in the D-W-15K datasets. 
On the OAEI KG Track datasets, FLORA completes the alignment process in about 1h45min. This is much slower than the runtime of lexical-based methods (e.g, 4 min on average for ALOD2Vec) that has been reported by the OAEI platform on a virtual machine with 32GB of RAM and 16 vCPUs (2.4 GHz)~\cite{oaei2021}. This is because FLORA relies more on structural reasoning, which, in turn, leads to better performance,  allows for explainability, and can find not only relation equivalence but also relation subsumption.
\section{Conclusion and Future Work} \label{sec:conclusion}

In this paper, we have introduced FLORA, an iterative method based on Fuzzy Logic that aligns entities and relations across knowledge graphs. FLORA provably converges to the solution of the corresponding Fuzzy Inference System. It can find  subrelation matches, it can deal with dangling entities, it does not need training data, and it provides interpretable results. Comparative experiments  on major EA and KG alignment benchmarks  show that FLORA outperforms all competitors on nearly all datasets. Future work could extend FLORA to more complex taxonomy structures, for example, by exploring cross-KGs subclasses, computing deductive closures, and taking into account the graph structure.

\noindent\textbf{\textit{Supplementary Material Statement.}} The dataset, source code, supplementary material, and hyperparameters are available at our GitHub repository \url{https://github.com/dig-team/FLORA}.

\clearpage

\bibliographystyle{splncs04}
\bibliography{mybibliography}

\begin{thebibliography}{10}
\providecommand{\url}[1]{\texttt{#1}}
\providecommand{\urlprefix}{URL }
\providecommand{\doi}[1]{https://doi.org/#1}

\bibitem{oaei2021}
Abd, M., Algergawy, A., Amardeilh, F., Amini, R., Fallatah, O., Faria, D., Fundulaki, I., Harrow, I., Hertling, S., Hitzler, P., Huschka, M., Ibanescu, L., Jim{\'e}nez-Ruiz, E., Karam, N., Laadhar, A., Lambrix, P., Li, H., Li, Y., Michel, F., Nasr, E., Paulheim, H., Pesquita, C., Portisch, J., Roussey, C., Saveta, T., Shvaiko, P., Splendiani, A., Trojahn, C., Vata{\v s}{\v c}inov{\'a}, J., Yaman, B., Zamazal, O., Zhou, L.: {Results of the Ontology Alignment Evaluation Initiative 2021 $\star$}. In: Shvaiko, P., Digitale, T., Euzenat, I.J., Alpes, I..U.G., Jim{\'e}nez-Ruiz, F.E., City, of~London, U., SIRIUS, U.., of~Oslo, U., Hassanzadeh, N.O., Research, I., Trojahn, U.C., IRIT, France (eds.) {Proceedings of the 16th International Workshop on Ontology Matching co-located with the 20th International Semantic Web Conference (ISWC 2021)}. vol.~3063, pp. 62--108. {Ceur Workshop Proceedings}, Virtual conference, United States (Oct 2021), \url{https://hal.inrae.fr/hal-05007937}

\bibitem{abi2023psychic}
Abi~Akl, H.: {PSYCHIC}: A neuro-symbolic framework for knowledge graph question-answering grounding. In: ISWC 2023-International semantic web conference (2023)

\bibitem{baaz2007first}
Baaz, M., Preining, N., Zach, R.: First-order g{\"o}del logics. Annals of Pure and Applied Logic  \textbf{147}(1-2),  23--47 (2007)

\bibitem{bach2017hinge}
Bach, S.H., Broecheler, M., Huang, B., Getoor, L.: Hinge-loss markov random fields and probabilistic soft logic. Journal of Machine Learning Research  \textbf{18}(109),  1--67 (2017)

\bibitem{chen2024macro}
Chen, H., Bei, Y., Shen, Q., Xu, Y., Zhou, S., Huang, W., Huang, F., Wang, S., Huang, X.: Macro graph neural networks for online billion-scale recommender systems. In: Proceedings of the ACM web conference 2024. pp. 3598--3608 (2024)

\bibitem{chen2020learning}
Chen, J., Li, Z., Zhao, P., Liu, A., Zhao, L., Chen, Z., Zhang, X.: Learning short-term differences and long-term dependencies for entity alignment. In: The Semantic Web--ISWC 2020: 19th International Semantic Web Conference, Athens, Greece, November 2--6, 2020, Proceedings, Part I 19. pp. 92--109. Springer (2020)

\bibitem{pearl}
Chen, L., Varoquaux, G., Suchanek, F.M.: {Learning High-Quality and General-Purpose Phrase Representations (PEARL)}. In: {EACL} (2024)

\bibitem{llm4ea}
Chen, S., Zhang, Q., Dong, J., Hua, W., Li, Q., Huang, X.: Entity alignment with noisy annotations from large language models. In: The Thirty-eighth Annual Conference on Neural Information Processing Systems (2024)

\bibitem{chen2023rethinking}
Chen, Z., Guo, L., Fang, Y., Zhang, Y., Chen, J., Pan, J.Z., Li, Y., Chen, H., Zhang, W.: Rethinking uncertainly missing and ambiguous visual modality in multi-modal entity alignment. In: International Semantic Web Conference (ISWC). pp. 121--139. Springer (2023)

\bibitem{dickens2021negative}
Dickens, C., Augustine, E.: Negative weights in hinge-loss markov random fields. In: Workshop on Tractable Probabilistic Modeling (TPM) (2021)

\bibitem{cpl-ot}
Ding, Q., Zhang, D., Yin, J.: Conflict-aware pseudo labeling via optimal transport for entity alignment. In: 2022 IEEE International Conference on Data Mining (ICDM). pp. 915--920. IEEE (2022)

\bibitem{dong2023hierarchy}
Dong, J., Zhang, Q., Huang, X., Duan, K., Tan, Q., Jiang, Z.: Hierarchy-aware multi-hop question answering over knowledge graphs. In: Proceedings of the ACM web conference 2023. pp. 2519--2527 (2023)

\bibitem{euzenat2007ontology}
Euzenat, J., Shvaiko, P.: Ontology matching. Springer (2007)

\bibitem{aml}
Faria, D., Santos, E., Balasubramani, B.S., Silva, M.C., Couto, F.M., Pesquita, C.: Agreementmakerlight. Semantic Web  \textbf{16}(2),  SW--233304 (2025)

\bibitem{labse}
Feng, F., Yang, Y., Cer, D., Arivazhagan, N., Wang, W.: Language-agnostic bert sentence embedding. In: Proceedings of the 60th Annual Meeting of the Association for Computational Linguistics (Volume 1: Long Papers). pp. 878--891 (2022)

\bibitem{fernandez2012fuzzyalign}
Fern{\'a}ndez, S., Velasco, J.R., Marsa-Maestre, I., Lopez-Carmona, M.A.: Fuzzyalign-a fuzzy method for ontology alignment. In: International Conference on Knowledge Engineering and Ontology Development. vol.~2, pp. 98--107. SciTePress (2012)

\bibitem{fodor2004left}
Fodor, J.: Left-continuous t-norms in fuzzy logic: an overview. Acta Polytechnica Hungarica  \textbf{1}(2),  35--47 (2004)

\bibitem{sobert}
Gosselin, F., Zouaq, A.: {SORBET}: a siamese network for ontology embeddings using a distance-based regression loss and bert. In: International Semantic Web Conference (ISWC). pp. 561--578. Springer (2023)

\bibitem{oaeikg}
Hertling, S., Paulheim, H.: The knowledge graph track at {OAEI}-gold standards, baselines, and the golden hammer bias. In: The Semantic Web - 17th International Conference, ESWC (2020)

\bibitem{atbox}
Hertling, S., Paulheim, H.: {ATBox} results for {OAEI} 2021. In: CEUR Workshop Proceedings. vol.~3063, pp. 137--143. RWTH Aachen (2021)

\bibitem{olala}
Hertling, S., Paulheim, H.: {OLaLa}: Ontology matching with large language models. In: Proceedings of the 12th Knowledge Capture Conference 2023. pp. 131--139 (2023)

\bibitem{hnatkowska2022fuzzy}
Hnatkowska, B., Kozierkiewicz, A., Pietranik, M.: Fuzzy logic framework for ontology instance alignment. In: International Conference on Computational Science. pp. 653--666. Springer (2022)

\bibitem{huang2023disentangled}
Huang, S., Hu, C., Kong, W., Liu, Y.: Disentangled contrastive learning for knowledge-aware recommender system. In: International Semantic Web Conference (ISWC). pp. 140--158. Springer (2023)

\bibitem{fuzzyea}
Jiang, W., Liu, Y., Deng, X.: Fuzzy entity alignment via knowledge embedding with awareness of uncertainty measure. Neurocomputing  \textbf{468},  97--110 (2022)

\bibitem{chatea}
Jiang, X., Shen, Y., Shi, Z., Xu, C., Li, W., Li, Z., Guo, J., Shen, H., Wang, Y.: Unlocking the power of large language models for entity alignment. In: Proceedings of the 62nd Annual Meeting of the Association for Computational Linguistics (Volume 1: Long Papers). pp. 7566--7583 (2024)

\bibitem{logmap}
Jim{\'e}nez-Ruiz, E., Cuenca~Grau, B.: {LogMap}: Logic-based and scalable ontology matching. In: International Semantic Web Conference (ISWC). pp. 273--288. Springer (2011)

\bibitem{hlmea}
Jin, X., Wang, Z., Chen, J., Yang, L., Oh, B., Hwang, S.w., Li, J.: {HLMEA}: Unsupervised entity alignment based on hybrid language models. In: Proceedings of the AAAI Conference on Artificial Intelligence. vol.~39, pp. 11888--11896 (2025)

\bibitem{kozierkiewicz2023fuzzy}
Kozierkiewicz, A., Pietranik, M., Jankowiak, W.: Fuzzy logic framework for ontology concepts alignment. In: Asian Conference on Intelligent Information and Database Systems. pp. 170--182. Springer (2023)

\bibitem{paris_plus}
Leone, M., Huber, S., Arora, A., Garc{\'\i}a-Dur{\'a}n, A., West, R.: A critical re-evaluation of neural methods for entity alignment. Proceedings of the VLDB Endowment  \textbf{15}(8),  1712--1725 (2022)

\bibitem{selfkg}
Liu, X., Hong, H., Wang, X., Chen, Z., Kharlamov, E., Dong, Y., Tang, J.: {SelfKG}: Self-supervised entity alignment in knowledge graphs. In: Proceedings of the ACM web conference 2022. pp. 860--870 (2022)

\bibitem{rhgn}
Liu, X., Zhang, K., Liu, Y., Chen, E., Huang, Z., Yue, L., Yan, J.: {RHGN}: Relation-gated heterogeneous graph network for entity alignment in knowledge graphs. In: Findings of the Association for Computational Linguistics: ACL 2023. pp. 8683--8696 (2023)

\bibitem{liu2021kg}
Liu, Y., Wan, Y., He, L., Peng, H., Yu, P.S.: {KG-BART}: Knowledge graph-augmented bart for generative commonsense reasoning. In: Proceedings of the AAAI conference on artificial intelligence. vol.~35, pp. 6418--6425 (2021)

\bibitem{attrgnn}
Liu, Z., Cao, Y., Pan, L., Li, J., Chua, T.S.: Exploring and evaluating attributes, values, and structures for entity alignment. In: Proceedings of the 2020 Conference on Empirical Methods in Natural Language Processing (EMNLP). pp. 6355--6364 (2020)

\bibitem{seu}
Mao, X., Wang, W., Wu, Y., Lan, M.: From alignment to assignment: Frustratingly simple unsupervised entity alignment. In: Proceedings of the 2021 Conference on Empirical Methods in Natural Language Processing. pp. 2843--2853 (2021)

\bibitem{alod2vec}
Portisch, J., Paulheim, H.: {ALOD2vec} matcher results for {OAEI} 2021. In: CEUR Workshop Proceedings. vol.~3063, pp. 117--123. RWTH Aachen (2022)

\bibitem{wiktionary}
Portisch, J., Paulheim, H.: Wiktionary matcher results for {OAEI} 2021. In: CEUR Workshop Proceedings. vol.~3063, pp. 199--206. RWTH Aachen (2022)

\bibitem{prase}
Qi, Z., Zhang, Z., Chen, J., Chen, X., Xiang, Y., Zhang, N., Zheng, Y.: Unsupervised knowledge graph alignment by probabilistic reasoning and semantic embedding. In: International Joint Conference on Artificial Intelligence (2021)

\bibitem{fis}
Sabri, N.: Fuzzy inference system: Short review and design. International Review of Automatic Control  (01 2013)

\bibitem{paris}
Suchanek, F.M., Abiteboul, S., Senellart, P.: {PARIS}: Probabilistic alignment of relations, instances, and schema. Proceedings of the VLDB Endowment  \textbf{5}(3) (2011)

\bibitem{jape}
Sun, Z., Hu, W., Li, C.: Cross-lingual entity alignment via joint attribute-preserving embedding. In: International semantic web conference (ISWC). pp. 628--644. Springer (2017)

\bibitem{bootea}
Sun, Z., Hu, W., Zhang, Q., Qu, Y.: Bootstrapping entity alignment with knowledge graph embedding. In: IJCAI. vol.~18 (2018)

\bibitem{transedge}
Sun, Z., Huang, J., Hu, W., Chen, M., Guo, L., Qu, Y.: {TransEdge}: Translating relation-contextualized embeddings for knowledge graphs. In: The Semantic Web--ISWC 2019: 18th International Semantic Web Conference, Auckland, New Zealand, October 26--30, 2019, Proceedings, Part I 18. pp. 612--629. Springer (2019)

\bibitem{openea}
Sun, Z., Zhang, Q., Hu, W., Wang, C., Chen, M., Akrami, F., Li, C.: A benchmarking study of embedding-based entity alignment for knowledge graphs. Proceedings of the VLDB Endowment  \textbf{13}(11) (2020)

\bibitem{fgwea}
Tang, J., Zhao, K., Li, J.: A fused gromov-wasserstein framework for unsupervised knowledge graph entity alignment. In: 61st Annual Meeting of the Association for Computational Linguistics, ACL 2023. pp. 3320--3334. Association for Computational Linguistics (ACL) (2023)

\bibitem{bert-int}
Tang, X., Zhang, J., Chen, B., Yang, Y., Chen, H., Li, C.: {BERT-INT}: a bert-based interaction model for knowledge graph alignment. In: Proceedings of the Twenty-Ninth International Conference on International Joint Conferences on Artificial Intelligence. pp. 3174--3180 (2021)

\bibitem{tarski}
Tarski, A.: A lattice-theoretical fixpoint theorem and its applications. Pacific Journal of Mathematics  (1955)

\bibitem{todorov2014fuzzy}
Todorov, K., Hudelot, C., Popescu, A., Geibel, P.: Fuzzy ontology alignment using background knowledge. International Journal of Uncertainty, Fuzziness and Knowledge-Based Systems  \textbf{22}(01),  75--112 (2014)

\bibitem{attre}
Trisedya, B.D., Qi, J., Zhang, R.: Entity alignment between knowledge graphs using attribute embeddings. In: Proceedings of the AAAI conference on artificial intelligence. vol.~33, pp. 297--304 (2019)

\bibitem{wang2022facing}
Wang, Y., Cui, Y., Liu, W., Sun, Z., Jiang, Y., Han, K., Hu, W.: Facing changes: continual entity alignment for growing knowledge graphs. In: International Semantic Web Conference (ISWC). pp. 196--213. Springer (2022)

\bibitem{gcnalign}
Wang, Z., Lv, Q., Lan, X., Zhang, Y.: Cross-lingual knowledge graph alignment via graph convolutional networks. In: Proceedings of the 2018 conference on empirical methods in natural language processing. pp. 349--357 (2018)

\bibitem{rdgcn}
Wu, Y., Liu, X., Feng, Y., Wang, Z., Yan, R., Zhao, D.: Relation-aware entity alignment for heterogeneous knowledge graphs. arXiv preprint arXiv:1908.08210  (2019)

\bibitem{ontoea}
Xiang, Y., Zhang, Z., Chen, J., Chen, X., Lin, Z., Zheng, Y.: {OntoEA}: Ontology-guided entity alignment via joint knowledge graph embedding. In: Findings of the Association for Computational Linguistics: ACL-IJCNLP 2021. pp. 1117--1128 (2021)

\bibitem{xin2022large}
Xin, K., Sun, Z., Hua, W., Hu, W., Qu, J., Zhou, X.: Large-scale entity alignment via knowledge graph merging, partitioning and embedding. In: Proceedings of the 31st ACM International Conference on Information \& Knowledge Management. pp. 2240--2249 (2022)

\bibitem{imea}
Xin, K., Sun, Z., Hua, W., Hu, W., Zhou, X.: Informed multi-context entity alignment. In: Proceedings of the Fifteenth ACM International Conference on Web Search and Data Mining. pp. 1197--1205 (2022)

\bibitem{nala}
Xu, C., Cheng, J., Zhang, F.: {NALA}: an effective and interpretable entity alignment method. In: Findings of the Association for Computational Linguistics: EMNLP 2024. pp. 13752--13772 (2024)

\bibitem{llmea}
Yang, L., Chen, H., Wang, X., Yang, J., Wang, F.Y., Liu, H.: Two heads are better than one: Integrating knowledge from knowledge graphs and large language models for entity alignment. arXiv preprint arXiv:2401.16960  (2024)

\bibitem{iclea}
Zeng, K., Dong, Z., Hou, L., Cao, Y., Hu, M., Yu, J., Lv, X., Cao, L., Wang, X., Liu, H., et~al.: Interactive contrastive learning for self-supervised entity alignment. In: Proceedings of the 31st ACM International conference on information \& knowledge management. pp. 2465--2475 (2022)

\bibitem{multike}
Zhang, Q., Sun, Z., Hu, W., Chen, M., Guo, L., Qu, Y.: Multi-view knowledge graph embedding for entity alignment. In: International Joint Conference on Artificial Intelligence (2019)

\bibitem{2020experimental}
Zhao, X., Zeng, W., Tang, J., Wang, W., Suchanek, F.M.: An experimental study of state-of-the-art entity alignment approaches. IEEE Transactions on Knowledge and Data Engineering  \textbf{34}(6),  2610--2625 (2020)

\bibitem{tea}
Zhao, Y., Wu, Y., Cai, X., Zhang, Y., Zhang, H., Yuan, X.: From alignment to entailment: A unified textual entailment framework for entity alignment. In: Findings of the Association for Computational Linguistics: ACL 2023. pp. 8795--8806 (2023)

\bibitem{sdea}
Zhong, Z., Zhang, M., Fan, J., Dou, C.: Semantics driven embedding learning for effective entity alignment. In: 2022 IEEE 38th International Conference on Data Engineering (ICDE). pp. 2127--2140. IEEE (2022)

\end{thebibliography}
%





\end{document}